\def\1{\bm{1}}
\def\vc{{\bm{c}}}
\def\vr{{\bm{r}}}
\DeclareMathAlphabet{\mathsfit}{\encodingdefault}{\sfdefault}{m}{sl}
\SetMathAlphabet{\mathsfit}{bold}{\encodingdefault}{\sfdefault}{bx}{n}
\newcommand{\E}{\mathbb{E}}
\renewcommand{\vr}{\mathbf{r}}
\renewcommand{\vc}{\mathbf{c}}
\title{An Equal-Size Hard EM Algorithm\\ for Diverse Dialogue Generation}
\author{
Yuqiao Wen$^1$ \quad Yongchang Hao$^1$ \quad Yanshuai Cao$^2$ \quad Lili Mou$^{1,3}$ \\
$^1$Dept.~Computing Science, Alberta Machine Intelligence Institute (Amii), University of Alberta \\
$^2$Borealis AI \quad $^3$Canada CIFAR AI Chair, Amii \\
\texttt{yq.when@gmail.com} \quad \texttt{yongcha1@ualberta.ca} \\ \texttt{yanshuai.cao@borealisai.com} \quad \texttt{doublepower.mou@gmail.com}
}
\def\ddefloop#1{\ifx\ddefloop#1\else\ddef{#1}\expandafter\ddefloop\fi}
\def\ddef#1{\expandafter\def\csname bb#1\endcsname{\ensuremath{\mathbb{#1}}}}
\def\ddef#1{\expandafter\def\csname cal#1\endcsname{\ensuremath{\mathcal{#1}}}}
\def\ddef#1{\expandafter\def\csname frak#1\endcsname{\ensuremath{\mathfrak{#1}}}}
\def\ddef#1{\expandafter\def\csname h#1\endcsname{\ensuremath{\hat{#1}}}}
\theoremstyle{plain}
\newtheorem{lemma}{Lemma}
\theoremstyle{definition}
\theoremstyle{remark}
\begin{document}

\maketitle

\begin{abstract}
Open-domain dialogue systems aim to interact with humans through natural language texts in an open-ended fashion.
Despite the recent success of super large dialogue systems such as ChatGPT, using medium-to-small-sized dialogue systems remains the common practice as they are more lightweight and accessible; however, generating diverse dialogue responses is challenging, especially with smaller models.
In this work, we propose an Equal-size Hard Expectation--Maximization (EqHard-EM) algorithm to train a multi-decoder model for diverse dialogue generation.
Our algorithm assigns a sample to a decoder in a hard manner and additionally imposes an equal-assignment constraint to ensure that all decoders are well-trained.
We provide detailed theoretical analysis to justify our approach.
Further, experiments on two large-scale open-domain dialogue datasets verify that our EqHard-EM algorithm generates high-quality diverse responses\footnote{
Our code is available at \url{https://github.com/MANGA-UOFA/EqHard-EM}
}.

\end{abstract}

\section{Introduction} \label{sec:introduction}

Open-domain dialogue systems aim to generate natural language text utterances to hold open-ended conversations with humans~\citep{li-etal-2017-adversarial,adalabel}.
These systems have shown great success, and are seamlessly integrated into our society through chatbots.
The recently launched ChatGPT\footnote{\url{https://openai.com/blog/chatgpt/}} model, for example, has shown remarkable conversational skills.
However, ChatGPT is prohibitively large in size and requires significant human feedback during its training process, and therefore, training medium-to-small-sized language models without human feedback still remains the common practice~\citep{recent_dialogue_2021,wang-etal-2021-k,chen-etal-2022-dialogved}, and these smaller models tend to generate generic responses such as \textit{I don't know}~\citep{dialogue-reinforcement,adalabel}.
One of the possible causes is the \textit{one-to-many mapping} phenomenon in the dialogue task, where a dialogue context may correspond to multiple plausible responses~\citep{wei2019neural, bao-etal-2020-plato, lili-latent-gan}.
Learning to generate a dialogue utterance is thus analogous to learning a \textit{multimodal} distribution of a continuous variable, where a \textit{mode} refers to a peak in the distribution; in the dialogue task, a mode can be thought of as a set of similar responses.
The widely used cross-entropy training is not good at capturing different modes, as it encourages the prediction to cover all plausible responses, forcing the model to learn an overly smooth distribution.
Consequently, these neural dialogue models resort to generating generic responses~\citep{wei2019neural}.

Previous studies address generic responses mainly at training or inference time.
Training-time approaches explore different training objectives:
\citet{dialogue-reinforcement} apply reinforcement learning to optimize a customized reward function that discourages generic responses.
\citet{lili-latent-gan} train a generative adversarial network in the latent space to discourage generic responses because they can be easily identified by the discriminator.
Among inference approaches, \citet{dbm-vijayakumar2016} encourage diverse beam results by penalizing similar beam entities.
\citet{nucleus-holtzman} apply nucleus sampling to allow plausible but less probable words to be decoded.
\citet{adalabel} apply label smoothing to prevent the model from being overly confident with generic responses.

Another way to address the generic responses is to explicitly capture different dialogue patterns with mixture models.
This has been explored for diverse machine translation, where \citet{em-translation} train a multi-decoder model with the Expectation--Maximization (EM) algorithm~\citep{em-dempster1977} to address the one-to-many mapping phenomenon.
However, their direct application of the standard EM algorithms suffers from the \textit{decoder-collapsing problem}, where the multi-decoder model degenerates to a single-decoder model.
The authors attempt to alleviate this problem by disabling the dropout layers during the E-step of the EM algorithm, but their results show that the model is still susceptible to collapses.

To this end, we propose a novel EM variant to multi-decoder diverse dialogue generation. A standard EM algorithm assigns a sample to all decoders by the posterior probability, thus known as Soft-EM; it suffers from \textit{synchronous-training} collapse, where the posterior probabilities tend to be similar and all decoders are also trained similarly. The Hard-EM variant trains the decoder that has the highest posterior probability; it suffers from \textit{non-training} collapse due to the rich-gets-richer phenomenon. 
Our proposed EM algorithm avoids both types of collapses by adopting hard assignments and imposing an equal-assignment constraint.
Hence, we call our approach Equal-size Hard EM (EqHard-EM).

We conducted experiments on the Weibo~\citep{weibo_dataset} and OpenSubtitles~\citep{lison2018opensubtitles2018} datasets.
Results show that our EqHard-EM algorithm alleviates the decoder-collapsing problem in both Soft-EM and Hard-EM, allowing the multi-decoder model to specialize and generate high-quality and diverse responses.
In addition, we provide in-depth theoretical and empirical analyses to better understand our EqHard-EM algorithm.

To sum up, our contributions are three-fold: 1) We propose the EqHard-EM algorithm to alleviate the collapse issues in soft and hard EM variants; 2) We provide detailed theoretical analysis to justify our equal assignment; and 3) We conduct extensive empirical experiments to show the effectiveness of our approach.

\section{Approach} \label{sec:approach}

In this section, we first present our mixture model architecture~(Subsection~\ref{subsec:model}).
Then, we propose EqHard-EM, a novel EM variant, for training such mixture models~(Subsection~\ref{subsec:training}).

\subsection{Neural Architecture} \label{subsec:model}

\begin{figure*}[!t]%
    \vspace{-.6cm}
    \centering
    \includegraphics[width=0.95\textwidth]{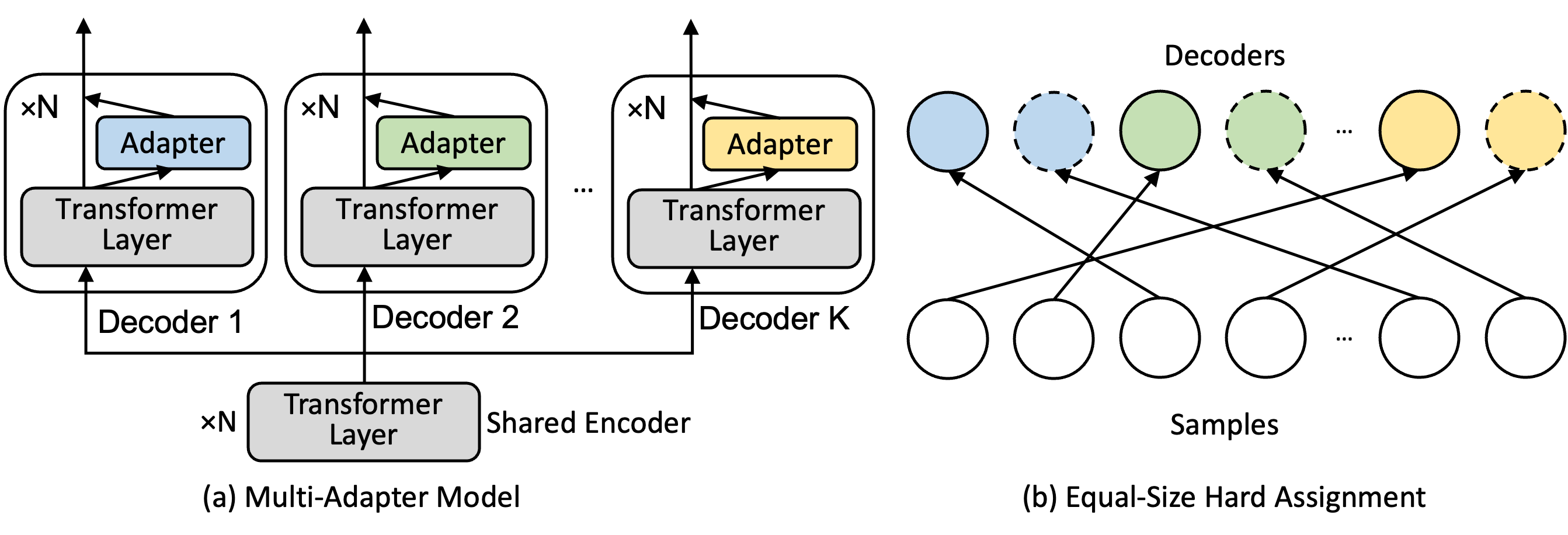}
    \vspace{-.3cm}
    \caption{(a) Our multi-adapter neural architecture. (b) The equal-size hard assignment scheme. Dashed circles: decoders  are conceptually duplicated when we solve the assignment problem.}%
    \label{fig:main-diagram}%
    \vspace{-.2cm}
\end{figure*}

We propose to address the one-to-many mapping phenomenon in the dialogue task with a mixture model.
Given a dialogue context, we use a shared encoder to build an input utterance's hidden representations, based on which multiple decoders generate a set of output responses. 

For multiple decoders, it is possible to instantiate each with a full Transformer model~\citep{em-translation}, but this  leads to a large number of parameters that may not fit into the memory of an ordinary GPU. To this end, we propose a multi-adapter architecture (Figure~\ref{fig:main-diagram}a), where different decoders share most Transformer parameters, but differ by a few inserted, thin adapter layers~\citep{adapter2019}. In fact, the adapter model is widely used for parameter-efficient domain adaptation~\citep{artetxe-etal-2020-cross,wang-etal-2021-k,ngo-trung-etal-2021-unsupervised}. We propose to apply this architecture to mixture models. Our preliminary experiments show that the multi-adapter model results in a degradation of one BLEU point, but reduces 76\% parameters compared with full-Transformer decoders. Therefore, we use the multi-adapter model in our main experiments.

Specifically, an adapter layer applies two projections of linear transformation and non-linear activation.
Further, a residual connection is included to ensure that the Transformer layers are not significantly disturbed. This process can be formulated as
\begin{align}
  \operatorname{AdapterLayer}(\mathbf{x}) = \mathbf{x} + \mathbf{W}_{1}( \phi ( \mathbf{W}_{2} \mathbf{x}) )
  \label{eq:adapter-layer}
\end{align}
where $\mathbf x \in \mathbb R^d$ is the input to the adapter layer; $\mathbf W_1 \in \mathbb R^{d \times d'}$ and $\mathbf W_2 \in \mathbb R^{d' \times d}$ are the projection matrices with $d' < d$; and $\phi$ is a non-linear activation function such as the rectified linear unit~\citep[ReLU,][]{relu}. 

\subsection{Training Methods} \label{subsec:training}

We propose to train our multi-adapter architecture in two stages: 1) Pretraining of the shared Transformer-layer parameters, and 2) EM training of adapter-layer parameters as the mixture model.
In the first stage, we pretrain a T5-small\footnote{In principle, other pretrained language models may also be used. In our development, we have tried finetuning GPT-2 on the OpenSubtitles dataset. It gave a similar, but slightly lower, result: 3.26 BLEU2-F vs T5-small's 3.46 BLEU2-F. Therefore, we chose T5-small as the base model, because it achieves higher performance with fewer parameters.
} encoder--decoder model~\citep{t52020} with a standard cross-entropy loss since the Transformer parameters are shared among different mixture components;
in the second stage, we adopt the standard treatment of adapter modules~\citep{adapter2019} and freeze the shared Transformer parameters to minimize the number of trainable parameters for effective adaptation.

In the rest of this subsection, we focus on the second stage, i.e., training the adapter parameters of the mixture model.
We first review classic EM algorithms and analyze their drawbacks.
Then, we propose a novel EM variant that addresses the drawbacks.

\textbf{Classic EM Algorithms.} 
Consider a multi-decoder model with $K$ decoders.
Given a dialogue context $\mathbf c$ and response $\mathbf r$, our training objective is to maximize the marginal log-likelihood $\log \sum_kP(\mathbf r, z=k|\mathbf c; \bm\theta)$, where $\bm\theta$ is the model parameters and the variable $z\in\{1,\cdots, K\}$ is the choice of the decoder.
Since $z$ is unobserved in the training set, it can be viewed as a latent variable, and we apply the expectation--maximization (EM) framework~\citep{em-dempster1977} as a principled approach to training latent-variable models.
Specifically, the EM algorithm alternates between the E-step and M-step:

\textit{\textbullet\;E-step}: Estimate the posterior distribution of the latent variable by applying Bayes' rule:
\begin{align}
    Q_k(\mathbf c, \mathbf r) &= P(z=k | \mathbf c, \mathbf r; \bm\theta) = \frac{P (\mathbf{r} | z=k, \mathbf{c}; \bm\theta ) P(z=k | \mathbf{c};\bm\theta ) } { \sum_{k'=1}^K P(\mathbf{r} |z=k', \mathbf{c}; \bm\theta) P(z=k' | \mathbf{c}; \bm\theta)}
    \label{eq:E-step}
\end{align}

\textit{\textbullet\;M-step:} Maximize the expected log-likelihood with respect to $\bm\theta$:
\begin{align}
    \mathbb E_{z \sim Q_k(\mathbf c, \mathbf r)} & [\log P(\mathbf r, z | \mathbf c; \bm\theta)] = \sum\nolimits_{k=1}^K Q_k(\mathbf c, \mathbf r) \log P(\mathbf r, z | \mathbf c; \bm\theta)
\end{align}

We additionally assume a uniform prior for every context $\mathbf c$ as $P(z=k | \mathbf c; \bm\theta) = 1/K $, following the one-to-many assumption that each context has multiple plausible responses. 
This assumption simplifies Eqn.~(\ref{eq:E-step}) as
\begin{align}
    P(z=k |\mathbf{c}, \mathbf{r}; \bm\theta) &= \frac{ P(\mathbf{r} | z=k, \mathbf{c}; \bm\theta )} { \sum_{k'=1}^K P(\mathbf{r} |z=k', \mathbf{c}; \bm\theta)}
    \label{eq:E-step-up}
\end{align}
suggesting that the posterior is proportional to the likelihood under the uniform prior assumption.

\textit{\textbullet\;Soft-EM.} The classic EM algorithm, as explained above, assigns a sample to a mixture component by the posterior distribution (\ref{eq:E-step}) in a soft manner, and thus is also known as Soft-EM. However, we observe that Soft-EM performs poorly in our experiments due to \textbf{synchronous-training collapse}\footnote{\citet{em-translation} refer to this phenomenon as the ``D2 Degeneracy'' and interpret it as the bypassing of latent variables, similar to the posterior collapse of variational auto-encoders~\citep[VAE,][]{bowman-etal-2016-generating, bahuleyan-etal-2018-variational}.
We find the connection with VAE weak. Instead, we present our own interpretation of synchronous training, which is justified by empirical evidence.
}, where all decoders are trained almost synchronously and generate similar responses.
In particular, the Transformer-layer parameters are pretrained. Even with the inserted adapter layers, all decoders still behave similarly, yielding a similar posterior distribution~(\ref{eq:E-step}). This in turn yields a similar gradient for each decoder, and as a result, multiple decoders are virtually collapsed to one decoder.

\textit{\textbullet\;Hard-EM.} Alternatively, the assignment in the E-step can be accomplished in a hard manner by choosing the best-fit mixture component
\begin{align}
  k^* = \operatorname{argmax}_k P(z=k | \mathbf c, \mathbf r; \bm\theta)
  \label{eq:E-step-hard}
\end{align}
Then, the M-step is approximated as follows
\begin{align}
\mathbb E_{z\sim Q_k(z|\mathbf c, \mathbf r; \bm\theta)}[\log P(\mathbf r, z|\mathbf c; \bm\theta)]\approx\log P(\mathbf r, z=k^*|\mathbf c; \bm\theta)
\end{align}
Unfortunately, the Hard-EM algorithm suffers from the \textbf{non-training collapse}, where only one decoder is trained due to the rich-gets-richer phenomenon.
That is to say, a particular decoder will be selected by Eqn.~(\ref{eq:E-step-hard}) due to the random initialization of adapter layers.
Then, that decoder will be trained with more samples and perform better than the other decoders, making it even more likely to be selected by Eqn.~(\ref{eq:E-step-hard}).
In practice, we observe that only one decoder is properly trained by the Hard-EM algorithm, making the mixture model ineffective.

\textbf{Our proposed EqHard-EM}. To this end, we propose an Equal-size Hard Expectation--Maximization (EqHard-EM) algorithm to address the collapse issues.
Specifically, we adopt the hard assignments from Hard-EM to avoid synchronous-training collapse, whereas we impose an equal-assignment constraint to ensure that all decoders are adequately trained to avoid non-training collapse.

We formulate the E-step as a constrained optimization problem over an assignment matrix $\mathbf A\in\{0,1\}^{N\times K}$, where $N$ is the number of samples and $K$ is the number of decoders. The element $A_{nk}$ indicates whether the $n$th sample is assigned to the $k$th decoder.
We define the cost matrix $\mathbf C\in \mathbb R^{N\times K}$ by the posterior probability, i.e., $C_{nk}=-Q_k(\mathbf r^{(n)},\mathbf c^{(n)})$ as in Eqn.~(\ref{eq:E-step}). Our equal-size hard E-step aims to 
\begin{align}
\operatorname*{minimize}\nolimits_{\mathbf A}\quad& \textstyle\sum_{k=1}^K\sum_{n=1}^N A_{nk}C_{nk}\\
\operatorname*{subject\ to}\quad& \textstyle\sum_{k=1}^K A_{nk}=1, \text{\quad for $n=1,\cdots,N$}\\
&\textstyle\sum_{n=1}^N A_{nk}=N/K,  \text{\quad for $k=1,\cdots,K$} \label{eq:equal-constraint} \\
&\textstyle A_{nk}\in \{0,1\} \label{eq:hard-constraint}
\end{align}
Compared with Hard-EM, we impose the constraint~(\ref{eq:equal-constraint}) to ensure that every decoder is assigned an equal number of samples.

In practice, this optimization problem can be transformed into a balanced assignment problem, 
as shown by Figure~\ref{fig:main-diagram}b.
This is accomplished by duplicating each decoder and its associated costs by a factor of $\frac NK$ to match the number of samples. Here, we enforce the batch size $N$ to be divisible by $K$.
Note that the decoders are only conceptually duplicated, and it does not incur additional computational cost.
We solve the balanced assignment problem by the classic Hungarian algorithm~\citep{kuhn1955hungarian}, which was later improved by \citet{o3_hungarian} to achieve a computational complexity of $O(N^3)$.
This is efficient in our batch implementation and contributes to less than 1\% of the total training time.

For inference, we follow existing literature~\citep{bleu-prf1-2017,gu2018dialogwae,em-translation} and consider the multi-response generation setting, where the goal is to generate a set of diverse responses given a dialogue context.
We collect the outputs of different decoders by greedy decoding.

\textbf{Theoretical analysis.} We provide theoretical justifications for our EqHard-EM algorithm. Our equal-assignment constraint makes sense intuitively, because the aggregated posterior should be the same as the prior distribution, suppose the posterior distribution is correctly estimated and we have enough samples. This is given by $\sum_{\textbf{c}} \sum_\textbf{r} P(z|\mathbf{r}, \mathbf{c}) P(\mathbf{r}, \mathbf{c})  = \sum_\textbf{c} P(z|\textbf{c}) P(\textbf{c})  = \sum_\textbf{c}  ( 1/ K)P(\textbf{c}) = 1/K$ for $z=1,\cdots, K$.
In this part, we develop a theorem on how equal the assignments would be in the imperfect-posterior and finite-sample cases.

\begin{restatable}{theorem}{fta}\label{thm:error}
With probability at least $1-\delta$, we have 
\begin{align*}
    & \bigg| \frac{1}{|\calD|} \sum_{(\mathbf{r}, \mathbf{c} )\in\calD} Q(z|\mathbf{r}, \mathbf{c}) - \frac{1}{K} \bigg|  \le \sqrt{\frac{1}{2 |\calD|} \log \left(\frac{2K}{\delta}\right) } + \E_{\vc, \vr} \bigg[\big| Q(z|\vc, \vr) - p(z|\vc, \vr) \big|\bigg]
\end{align*}
for all $z \in \{1, \cdots, K\}$. Here, $Q(z|\mathbf c, \mathbf r)$ is an estimated posterior and $\calD$ is a dataset with finite samples.
\end{restatable}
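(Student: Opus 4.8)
The plan is to split the quantity of interest into a \emph{statistical-fluctuation} term and a \emph{posterior-approximation} term, then control the former with Hoeffding's inequality and the latter with the aggregated-posterior identity already sketched in the theoretical-analysis paragraph. Concretely, I would first insert the population mean $\E_{\vc,\vr}[Q(z|\vc,\vr)]$ and apply the triangle inequality:
\[
\bigg|\tfrac{1}{|\calD|}\!\sum_{(\vr,\vc)\in\calD}\!\!Q(z|\vr,\vc)-\tfrac1K\bigg|
\le
\bigg|\tfrac{1}{|\calD|}\!\sum_{(\vr,\vc)\in\calD}\!\!Q(z|\vr,\vc)-\E_{\vc,\vr}[Q(z|\vc,\vr)]\bigg|
+\bigg|\E_{\vc,\vr}[Q(z|\vc,\vr)]-\tfrac1K\bigg|.
\]
The first summand is a sample-mean deviation (the fluctuation term) and the second is a deterministic bias term.

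For the bias term I would invoke the identity $\E_{\vc,\vr}[p(z|\vc,\vr)] = \sum_{\vc} p(z|\vc)P(\vc) = 1/K$, which holds under the uniform-prior assumption exactly as derived in the excerpt. Substituting $1/K = \E_{\vc,\vr}[p(z|\vc,\vr)]$ rewrites the bias term as $\big|\E_{\vc,\vr}[Q(z|\vc,\vr)-p(z|\vc,\vr)]\big|$, which is at most $\E_{\vc,\vr}\big[|Q(z|\vc,\vr)-p(z|\vc,\vr)|\big]$ by Jensen's inequality (equivalently, moving the absolute value inside the expectation). This is precisely the second summand of the claimed bound and involves no randomness.

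For the fluctuation term I would treat $\calD$ as i.i.d.\ draws $(\vr^{(n)},\vc^{(n)})\sim P(\vr,\vc)$ and set $X_n := Q(z|\vr^{(n)},\vc^{(n)})$. Since a posterior probability satisfies $Q(z|\cdot,\cdot)\in[0,1]$, the $X_n$ are i.i.d.\ bounded variables with mean $\E_{\vc,\vr}[Q(z|\vc,\vr)]$, so Hoeffding's inequality gives, for each fixed $z$, $\Pr\big(|\tfrac1{|\calD|}\sum_n X_n-\E_{\vc,\vr}[Q(z|\vc,\vr)]|\ge t\big)\le 2\exp(-2|\calD|t^2)$. Setting the right-hand side to $\delta/K$ and solving yields the threshold $t=\sqrt{\tfrac{1}{2|\calD|}\log(2K/\delta)}$, matching the first summand of the bound.

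The argument then closes with a union bound over $z\in\{1,\dots,K\}$: each of the $K$ fluctuation events fails with probability at most $\delta/K$, so all hold simultaneously with probability at least $1-\delta$; combining this with the deterministic bias bound gives the theorem uniformly in $z$. I do not anticipate a serious obstacle, as this is a standard concentration-plus-union-bound argument; the only points that require care are (i) matching the $1/K$ on the left against $\E_{\vc,\vr}[p]$ rather than $\E_{\vc,\vr}[Q]$, so that the controllable gap $|Q-p|$ appears instead of an uncontrolled quantity, and (ii) allocating the confidence budget as $\delta/K$ per coordinate, which is what produces the $\log(2K/\delta)$ factor rather than $\log(2/\delta)$. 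The substantive input is the aggregated-posterior identity; everything downstream is routine.
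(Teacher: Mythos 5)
Your proposal is correct and follows essentially the same route as the paper's own proof: the same triangle-inequality decomposition around $\E_{\vc,\vr}[Q(z|\vc,\vr)]$, the same Hoeffding bound on the empirical average (the paper's Lemma~\ref{lem:hoeffding}), the same aggregated-posterior identity and triangle/Jensen step for the bias term (the paper's Lemma~\ref{lem:abs}), and the same union bound over $z$ with the $\delta/K$ budget producing the $\log(2K/\delta)$ factor. Your explicit remark that $Q(z|\cdot,\cdot)\in[0,1]$ justifies Hoeffding's inequality is, if anything, slightly more careful than the paper's phrasing.
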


The proof sketch is to consider finite samples (Lemma~\ref{lem:hoeffding}) and the estimated posterior (Lemma~\ref{lem:abs}) separately, which can then be combined in a straightforward manner.
See Appendix~\ref{sec:prf:error} for detailed proof. 

Theorem~\ref{thm:error} justifies the equal-assignment constraint, because with a high probability, the assignments will converge to uniform if we have a large dataset and a near-correct posterior. 
Note that the term $\sum_{(\mathbf r, \mathbf c)\in\mathcal D}Q(z|\mathbf r, \mathbf c)$ in the theorem mainly works for soft assignments. However, we could not adopt soft assignment in the algorithm because it leads to synchronous-training collapse.
The theorem is nevertheless illuminating to our algorithm, which is a hard-assignment approximation similar to Hard-EM.

\section{Experiments}

\subsection{Setup}

\textbf{Datasets.} We evaluated our approach on two large-scale open-domain dialogue datasets: Weibo~\citep{weibo_dataset} and OpenSubtitles~\citep{lison2018opensubtitles2018}.
The Weibo dataset is a compilation of posts and replies, crawled from a Chinese micro-blogging website.
The dataset features explicit one-to-many mapping because in most cases, a post corresponds to multiple replies.
On the other hand, the OpenSubtitles dataset is a collection of movie dialogues, which are parsed from online subtitle files.
The dataset is known to be noisy~\citep{noise-2015,noise_2020,noise-2021}, and therefore it demonstrates the implicit one-to-many phenomenon: a movie utterance may lead to many plausible responses even though the dataset only contains one reference response.
Admittedly, there exist other manually constructed, high-quality datasets such as DailyDialog~\citep{li2017dailydialog}, but they are usually much smaller and the one-to-many phenomenon is less severe. 

Recently, \citet{wen-luo-mou:2022:LREC} identify significant overlapping between training, validation, and/or test splits in various open-domain dialogue datasets (including OpenSubtitles) due to the oversights of data collection.
We use their cleaned OpenSubtitles dataset, which contains 1M, 12K, and 12K samples for training, validation, and test, respectively.
We investigated the Weibo dataset and found significant overlapping between the validation and test sets as well, so we followed their cleaning strategy and obtained 4M, 10K, and 10K training, validation, and test samples.
Overall, Weibo and OpenSubtitles allow us to conduct comprehensive analyses in both explicit and implicit one-to-many mapping scenarios and in different languages.

\begin{table*}[!t]
    \centering    \vspace{-.1cm}

    \resizebox{\textwidth}{!}{%
    \begin{tabular}{|l|l|rrr|rrr|rr|r|}
    \hline
    \multirow{2}{*}{Dataset} &
      \multicolumn{1}{c|}{\multirow{2}{*}{Model}} &
      \multicolumn{3}{c|}{BLEU1$^\uparrow$} &
      \multicolumn{3}{c|}{BLEU2$^\uparrow$} &
      \multicolumn{2}{c|}{Distinct $n$-gram$^\uparrow$} &
      \multicolumn{1}{c|}{\multirow{2}{*}{\begin{tabular}[c]{@{}c@{}}Pairwise\\ BLEU$^\downarrow$\end{tabular}}} \\ \cline{3-10}
     &
      \multicolumn{1}{c|}{} &
      \multicolumn{1}{c}{F} &
      \multicolumn{1}{c}{P} &
      \multicolumn{1}{c|}{R} &
      \multicolumn{1}{c}{F} &
      \multicolumn{1}{c}{P} &
      \multicolumn{1}{c|}{R} &
      \multicolumn{1}{c}{Dist-1} &
      \multicolumn{1}{c|}{Dist-2} &
      \multicolumn{1}{c|}{} \\ \hline\hline
    \multirow{5}{*}{Weibo}         & T5-small beam search                             & 18.59 & 25.59 & 14.60 & 8.51  & \textbf{17.57} & 5.62 & 10.41 & 14.17 & 65.46 \\
                                   & T5-small nucleus sampling$^\dagger$    & 20.76 & 26.13 & 17.23 & 10.03 & 17.44 & 7.04 & 16.90 & 23.79 & 44.55 \\
                                   & AdaLabel~\citep{adalabel}                                       & 14.80 & 21.63 & 11.25 & 5.86  & 13.72 & 3.72 & 10.41 & 14.71 & 62.46 \\
                                   & DialogBERT~\citep{dialogbert}$^\dagger$                         & 15.99 & 15.88 & 16.10 & 3.55  & 3.95  & 3.23 & 68.95 & 96.76 & 0.01  \\
                                   & Our approach                                & \textbf{22.65} & \textbf{26.17} & \textbf{19.96} & \textbf{10.13} & 14.55 & \textbf{7.77} & \underline{27.03} & \underline{42.02} & \underline{22.32} \\ \hline\hline
    \multirow{4}{*}{Open-} & T5-small   & 14.39 & 11.23 & 20.01 & 3.46  & \textbf{2.40}  & 6.22 & 13.38 & 20.71 & 42.76 \\
     \multirow{4}{*}{Subtitles}                              & T5-small nucleus sampling$^\dagger$    & 16.03 & 11.21 & 28.13 & 2.51  & 1.47  & 8.48 & 60.50 & 91.26 & 0.29  \\
                                   & AdaLabel~\citep{adalabel}                                       & 15.80 & 11.94 & 23.38 & 3.20  & 2.19  & 5.94 & 26.47 & 43.57 & 17.21 \\
                                   & DialogBERT~\citep{dialogbert}$^\dagger$                         & 12.86 &  8.71 & 24.50 & 1.20  & 0.68  & 4.94 & 69.70 & 96.49 & 0.02  \\
                                   & Our approach                                & \textbf{17.86} & \textbf{12.98} & \textbf{28.60} & \textbf{3.76}  & 2.36  & \textbf{9.27}  & \underline{34.09} & \underline{50.41} & \underline{13.15} \\ \hline
    \end{tabular}%
    }    \vspace{-.3cm}

    \caption{Results on Weibo and OpenSubtitles.
    We use cleaned versions of these datasets since the original datasets contain overlapping samples~\citep{wen-luo-mou:2022:LREC}.
    $^\dagger$ indicates sampling-based approaches; DialogBERT performs poorly with beam search or greedy decoding, also reported by \citet{wen-luo-mou:2022:LREC}.
    $^{\uparrow/\downarrow}$The higher/lower, the better. Underlined numbers are the best diversity scores (Dist and Pairwise-BLEU) among non-sampling methods. Note that comparing diversity is only meaningful when BLEU scores are controlled.}
    \label{tab:main}
    \vspace{-.2cm}
\end{table*}

\textbf{Metrics.}
Open-domain dialogue evaluation is currently an active area  of research due to its one-to-many nature~\citep{ruber,maude}.
In particular, the standard BLEU metric, which measures the $n$-gram similarity between generated and reference responses, has a low correlation with human evaluation because a plausible response is not necessarily similar to the reference response~\citep{liu-etal-2016-how-not-evaluate}. 
Therefore, we followed previous work on open-domain dialogue~\citep{bleu-prf1-2017,gu2018dialogwae,li-etal-2020-optimus} and generated 10 responses for each dialogue context to compute BLEU-P, BLEU-R, and BLEU-F.
Roughly, BLEU-P measures the average quality of the generated responses, whereas BLEU-R measures how well the references are covered.
BLEU-F is the harmonic mean.\footnote{
The terminologies of BLEU-P, BLEU-R, and BLEU-F are inspired by, but different from, the precision and recall in information retrieval~\citep{ir-textbook}. Interested readers are referred to \citet{bleu-prf1-2017} for the details of BLEU-P, BLEU-R, and BLEU-F.
}
We adopted uni-gram and bi-gram BLEU scores in our experiments, because \citet{liu-etal-2016-how-not-evaluate} show that the $n$-gram overlap is often zero for dialogue evaluation with $n>2$.
Consequently, high-order BLEU scores mostly rely on smoothing techniques and correlate poorly with human satisfaction.

For diversity, we adopt the Dist~\citep{li-etal-2016-dist} and Pairwise-BLEU~\citep{em-translation} scores.
Dist-$n$ is a widely used metric that measures the percentage of distinct $n$-grams among utterances.
We follow the standard practice and report Dist-1 and Dist-2 between the generated responses for each dialogue context~\citep{li-etal-2016-dist, adalabel}.
In addition, we follow~\citet{em-translation} and report Pairwise-BLEU, which measures the pairwise similarity among the responses, i.e., a low Pairwise-BLEU indicates high diversity.

\textbf{Implementation details.} We considered 10 decoders due to the setup of diverse generation evaluation~\citep{bleu-prf1-2017,gu2018dialogwae,li-etal-2020-optimus}. Each E-step involved 640 samples, assigning 64 samples to each decoder. More implementation details are presented in Appendix~\ref{app:details}.
\subsection{Results and Analyses}

\textbf{Main results.}
Table~\ref{tab:main} shows our main results on Weibo and OpenSubtitles.
The T5-small model is a pretrained encoder--decoder Transformer~\citep{t52020}, where we experimented with both beam search and sampling. For beam search, we set the beam size to be 10, i.e., the number of samples needed for evaluation. As known, beam search does not generate diverse responses~\citep{dbm-vijayakumar2016,nucleus-holtzman}, shown by low BLEU-R and Dist scores, but high Pairwise-BLEU. 
The performance of nucleus sampling is inconsistent in the two experiments: on the Weibo dataset, it slightly improves both generation quality and diversity; on OpenSubtitles, however, the generated responses are much more diverse, but the generation quality is not convincingly improved, as shown by the low BLEU2-F score.
We examined the samples and believe this is because nucleus sampling depends on the distribution temperature and the nucleus size for sampling: if the sampling scope is too small, then the diversity would not be improved much; if otherwise the sampling scope is too large, the quality may deteriorate.

By contrast, our EqHard-EM approach is able to perform consistently well on both datasets by simple greedy decoding. Our BLEU-R is much higher than most competing methods. This shows that our EqHard-EM has a high coverage of potential responses, largely alleviating the one-to-many issue in dialogue generation. Overall, we achieve the highest BLEU-F scores, while largely improving diversity. In fact, we outperform all non-sampling methods in terms of both Dist and Pairwise-BLEU scores.

We also experimented with recently proposed models:
AdaLabel~\citep{adalabel} is an encoder--decoder Transformer, trained with adaptive label smoothing to improve diversity.
DialogBERT~\citep{dialogbert} features the use of a BERT encoder~\citep{devlin-etal-2019-bert} to capture contextual information.
They were alleged to be state-of-the-art models when the dataset-overlapping problem was not realized. However, they perform poorly on the deduplicated datasets; our results in the diverse generation setting are consistent with single-response generation~\citep{wen-luo-mou:2022:LREC}.

\textbf{Comparing EM variants.} We conducted controlled experiments to compare EM-like algorithms (Table~\ref{tab:training-algorithms}).
We find Soft-EM suffers from synchronous-training collapse, as it achieves very high Pairwise-BLEU scores. It achieves even lower diversity than beam search because of the synchronous-training collapse. This can be best seen by decoder assignment statistics (Figure~\ref{fig:stat}a), where every decoder is assigned an equal fraction of samples with near-zero standard deviation.

On the other hand, the Hard-EM algorithm also performs poorly. It suffers from the non-training collapse, due to the rich-gets-richer phenomenon. As seen in Figure~\ref{fig:stat}b, only Decoder 6 is trained in the experiment. Hard-EM achieves near-zero Pairwise-BLEU and the lowest Dist scores (Table~\ref{tab:training-algorithms}) because the untrained decoders tend to repeat random words, which can be verified in our case studies (Appendix~\ref{sec:case-study}).
As a result, the generated sentences are not meaningful, as shown by extremely low BLEU scores.

As shown in Figure~\ref{fig:stat}c, our EqHard-EM guarantees that all decoders are trained with the same number of samples, preventing non-training collapse; since our algorithm adopts hard assignments (either 0 or 1), it also has a high variance among different samples, preventing from synchronous-training collapse.
Overall, Table~\ref{tab:training-algorithms} shows that our EqHard-EM achieves the highest BLEU-F score, as well as the best diversity scores (excluding the improperly trained Hard-EM). Appendix~\ref{sec:case-study} presents sample responses generated by Soft-EM, Hard-EM, and EqHard-EM.

\citet{em-translation} have experimented with other EM variants including disabling the dropout and learning the prior. We analyze them in Appendix~\ref{app:dropout} due to the limit of space,  especially as these variants are neither effective nor standard.

\begin{table*}[!t]
\centering    \vspace{-.2cm}
\resizebox{\textwidth}{!}{%
\begin{tabular}{|l|l|rrr|rrr|rr|r|}
\hline
\multirow{2}{*}{Dataset} &
  \multicolumn{1}{c|}{\multirow{2}{*}{Model}} &
  \multicolumn{3}{c|}{BLEU1$^\uparrow$} &
  \multicolumn{3}{c|}{BLEU2$^\uparrow$} &
  \multicolumn{2}{c|}{Distinct $n$-gram$^\uparrow$} &
  \multicolumn{1}{c|}{\multirow{2}{*}{\begin{tabular}[c]{@{}c@{}}Pairwise\\ BLEU$^\downarrow$\end{tabular}}} \\ \cline{3-10}
 &
  \multicolumn{1}{c|}{} &
  \multicolumn{1}{c}{F} &
  \multicolumn{1}{c}{P} &
  \multicolumn{1}{c|}{R} &
  \multicolumn{1}{c}{F} &
  \multicolumn{1}{c}{P} &
  \multicolumn{1}{c|}{R} &
  \multicolumn{1}{c}{Dist-1} &
  \multicolumn{1}{c|}{Dist-2} &
  \multicolumn{1}{c|}{} \\ \hline\hline
\multirow{6}{*}{Weibo}         & Beam search                             & 18.59 & 25.59 & 14.60 & 8.51  & \textbf{17.57} & 5.62 & 10.41 & 14.17 & 65.46 \\
                               & Soft-EM          & 17.02 & \textbf{27.55} & 12.32 & 6.27  & 16.03 & 3.89 & 10.69 & 12.45 & 86.25 \\
                               & Hard-EM          &  4.81 &  3.09 & 10.84 & 2.15  & 1.61  & 3.22 & 4.01  & 4.11  & 0.00  \\
                               & EqRandom-Dynamic                        & 18.72 & 27.34 & 14.23 & 7.21  & 15.87 & 4.66 & 12.79 & 16.30 & 71.19 \\
                               & EqRandom-Fixed                          & 20.01 & 25.51 & 16.46 & 8.12  & 14.41 & 5.65 & 19.40 & 25.77 & 46.60 \\
                               & EqHard-EM                               & \textbf{22.65} & 26.17 & \textbf{19.96} & \textbf{10.13} & 14.55 & \textbf{7.77} & 27.03 & 42.02 & 22.32 \\ \hline\hline
\multirow{6}{*}{OpenSubtitles} & Beam search                             & 14.39 & 11.23 & 20.01 & 3.46  & 2.40  & 6.22 & 13.38 & 20.71 & 42.76 \\
                               & Soft-EM         & 15.48 & 14.55 & 16.54 & 3.03  & \textbf{2.76}  & 3.35 & 12.89 & 15.35 & 72.95 \\
                               & Hard-EM       &  6.84 &  4.13 & 19.81 & 1.25  & 0.73  & 4.30 & 4.83  & 4.81  & 1.50  \\
                               & EqRandom-Dynamic                        & 15.80 & 14.56 & 17.27 & 2.92  & 2.56  & 3.40 & 14.52 & 18.57 & 70.12 \\
                               & EqRandom-Fixed                          & 15.89 & \textbf{14.59} & 17.46 & 2.93  & 2.56  & 3.43 & 14.73 & 18.96 & 68.89 \\
                               & EqHard-EM                               & \textbf{17.86} & 12.98 & \textbf{28.60} & \textbf{3.76}  & 2.36  & \textbf{9.27} & 34.09 & 50.41 & 13.15 \\ \hline
\end{tabular}%
}    \vspace{-.2cm}
\caption{Comparing EM-like algorithms based on multi-adapter T5-small. The Soft-EM and Hard-EM variants follow \citet{em-translation}, who address diverse machine translation.}
\label{tab:training-algorithms}
\end{table*}

\begin{figure*}[!t]%
    \centering\vspace{-.2cm}
    \includegraphics[width=\linewidth]{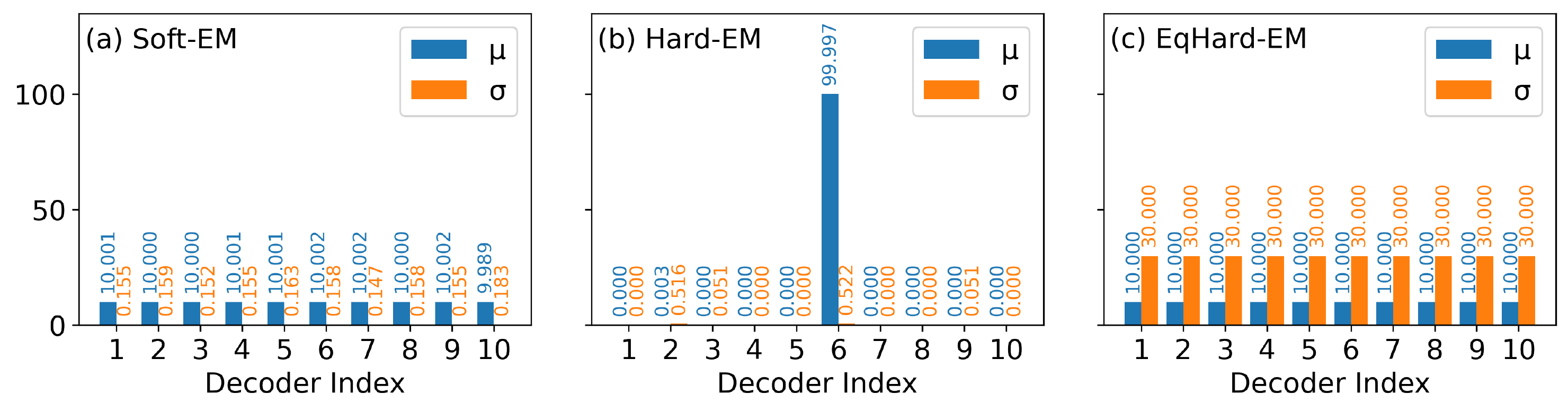}    \vspace{-.5cm}
    \caption{Decoder assignment statistics for (a) Soft-EM, (b) Hard-EM, and (c) EqHard-EM. Means and standard deviations are computed on the Weibo training set.}
    \label{fig:stat}%
\vspace*{-1em}
\end{figure*}

\begin{CJK*}{UTF8}{gbsn}
\begin{figure*}[!t]%
    \centering
    \includegraphics[width=\textwidth]{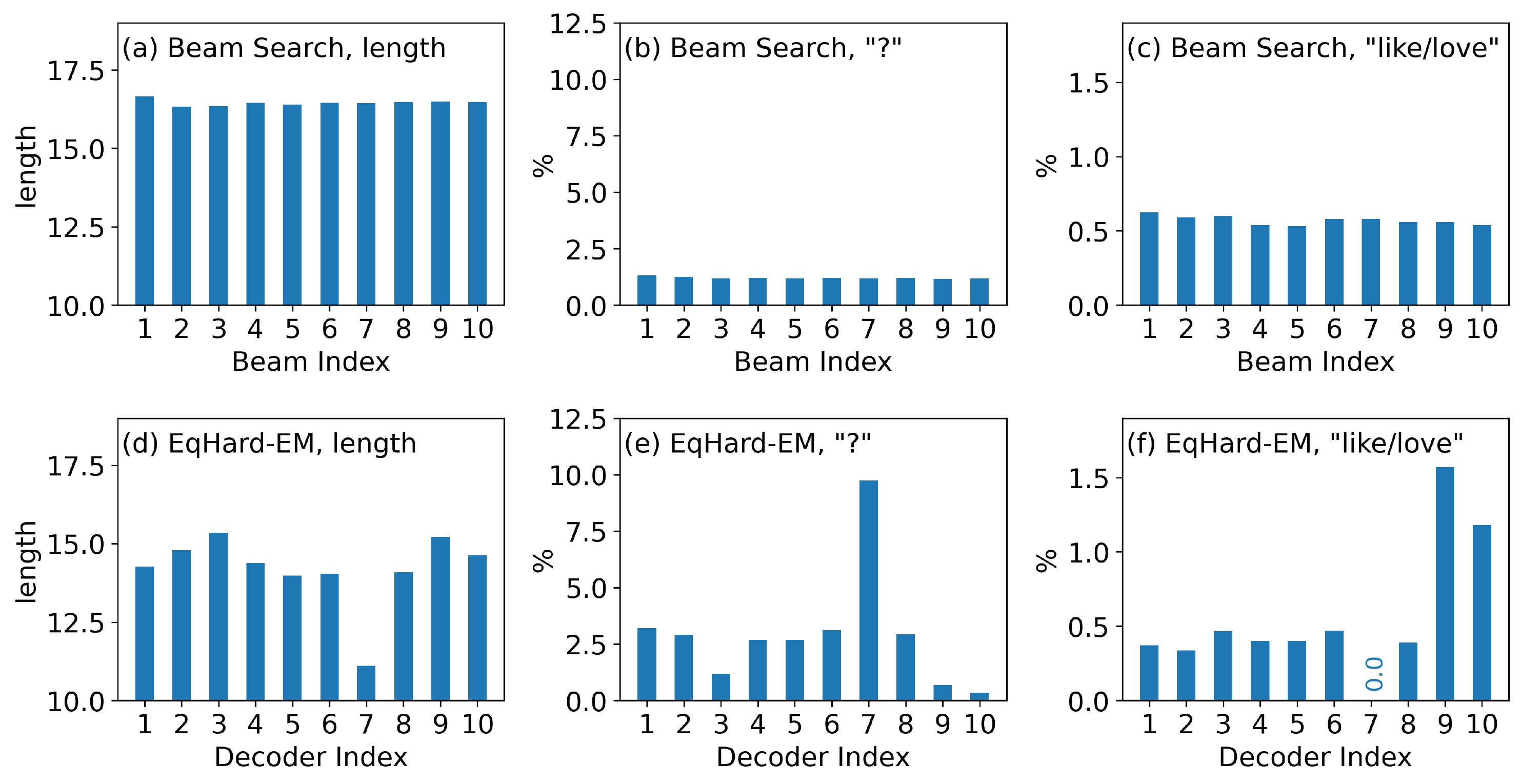}
    \vspace{-0.6cm}
    \caption{Comparing outputs from beam search and EqHard-EM on the Weibo dataset. Subplots (a) and (d) compare the length of generated sentences; Subplots (b) and (e) compare the percentage of the question mark; and Subplots (c) and (f) compare the percentage of the word ``like/love'' (喜欢) generated.}
    \label{fig:response-analysis}
    \vspace{-0.2cm}
\end{figure*}
\end{CJK*}

We further investigate whether EM-style, posterior-based assignment actually helps. We compare our EqHard-EM with two other equal-size hard assignment methods: \texttt{EqRandom-Fixed} and \texttt{EqRandom-Dynamic}. The \texttt{EqRandom-Fixed} variant pre-assigns samples to a decoder before training; this is equivalent to training each decoder on a partition of the training set. \texttt{EqRandom-Dynamic}  assigns samples to decoders randomly and dynamically during training; this is essentially training a model with multiple initializations, if the number of epochs is large. In both cases, we ensure each decoder is assigned the same number of samples, so they are equal-size hard assignments. As seen, these two methods perform similarly to each other. They are slightly better than beam search, but much worse than our EqHard-EM in terms of both quality and diversity.

These controlled experiments convincingly show that our EqHard-EM is able to take advantage of the principled EM algorithm to learn a mixture of decoders, and that our model does not suffer from the collapse issue.

\textbf{Analyzing different decoders.}
We analyze how decoders are specialized in certain aspects of dialogue responses. Admittedly, the responses generated by different decoders would not exhibit clear clustering phenomena, because our inputs are open-domain dialogue utterances covering a wide range of topics. The generated responses must fit the input in a certain way, so the between-sample variance would be larger than the decoder variance. Despite this, we do find obvious patterns of certain decoders, including the length\footnote{We observe the EqHard-EM generates slightly shorter responses than beam search (Figures~\ref{fig:response-analysis}a and ~\ref{fig:response-analysis}d). By examining the utterances, we find beam search tends to generate repetitive words such as ``ha ha ha\dots''},
sentence structures (e.g., whether it is a question), and the preference of certain words (Figure~\ref{fig:response-analysis}). For example, Decoder 7 tends to generate short question responses, whereas Decoders 9 and 10 tend to generate positive-sentiment utterances with the word ``like/love''. For comparison, we also show beam search results, which are nearly uniform among different samples in the beam. Our decoder analysis, despite its simplicity, is conducted in a quantitative fashion,
as opposed to previous studies that are purely based on examples~\citep{gu2018dialogwae,em-translation}.
This provides rigorous evidence that our EqHard-EM is able to train multiple decoders to capture different ``modes'' of open-domain dialogue utterances.

\textbf{Additional results.} We present additional results in appendices. \ref{apdx:decoder-scaling}: Effect of the number of decoders, \ref{sec:he}: Human evaluation, \ref{sec:case-study}: Case study, \ref{app:dropout}: Analysis of additional EM variants, and \ref{apdx:efficiency}: Efficiency analysis.

\section{Related Work}

Open-domain dialogue generation is a challenging task as neural dialogue systems may generate generic responses such as ``I don't know'', which is especially apparent for moderately sized models due to their limited modeling capacity~\citep{dialogue-reinforcement,adalabel}.
In addition to the training-time and inference-time approaches mentioned in Section~\ref{sec:introduction}, another line of work addresses the problem using cue words to guide the response generation~\citep{mou-etal-2016-sequence}.
\citet{yao-etal-2017-towards} enhance the gated recurrent unit to fuse cue words that are heuristically selected based on point-wise mutual information.
\citet{weibo_dataset} jointly train a cue word inference model along with the decoder network to further improve performance.
\citet{zou-etal-2021-thinking} apply non-autoregressive generation to incorporate cue words in the generated response.
Generic responses may also be alleviated by grounding on commonsense knowledge.
\citet{dialogue-knowledge-graph-2018} use a large-scale knowledge graph to ground the generation process.
\citet{wu-etal-2020-diverse} make further improvements by retrieving and fusing relevant knowledge facts.
These approaches do not directly address the one-to-many phenomenon in open-domain dialogue generation, but rely on external information to make the responses more meaningful.

Large language models (LLMs) suffer less from generic responses because they have more model capacity to learn the desired distribution required for the one-to-many dialogue task.
Recently, LLMs have shown success in text generation with reinforcement learning using intensive human feedback.
\citet{instructgpt} propose InstructGPT to improve LLM's ability to follow human instructions.
They warm-start the LLM by supervised learning, and then finetune it with proximal policy optimization~\cite[PPO,][]{ppo}, where the feedback is provided by a reward model.
Later, OpenAI releases ChatGPT, which follows a similar training pipeline and achieves remarkable performance for dialogue generation.
These models require significant human annotation for both expert demonstration and reward modeling, making them inaccessible to most users and researchers.
Medium-to-small-sized dialogue systems trained by supervised learning remain the common practice for dialogue generation.

Expectation--maximization~\citep[EM,][]{em-dempster1977} is a principled approach to training mixture models.
The algorithm has been successfully applied in various fields~\citep{em-for-rl,em-for-pca,Li_2019_ICCV}.
Specifically in natural language processing, EM is applied to unsupervised or semi-supervised training of hidden Markov models for sequential labeling~\citep{hidden-markov}, known as the classic Baum--Welch algorithm~\citep{baum-welch}.
\citet{doc_sim_1999} apply the EM algorithm for probabilistic latent semantic analysis (pLSA), where the topic is an unobserved variable.
In the neural regime, the most related work to ours is \citet{em-translation}, as they apply Soft-EM and Hard-EM to diverse machine translation. We find it controversial whether diversity is needed in translation; instead, we focus on dialogue generation, where diversity is desired to avoid generic responses~\citep{li-etal-2016-dist,dialogue-reinforcement}. Moreover, we observe neither their Soft-EM nor Hard-EM works well, and we propose a theoretically justified EqHard-EM variant.

Another related (but different) research topic is the mixture of experts~\citep[MoE,][]{jacobs1991adaptive, pioneer-moe, shazeer2017outrageously}, where an expert $k$ predicts a real-valued feature vector $\bm e_k$, mixed by predicted probabilities $\bm p$ as $\sum_kp_k\bm e_k$.
The main difference is that we mix decoders' probabilities rather than features: $P(\mathbf r| \mathbf c)=\sum_kP(z=k|\mathbf c)P(\mathbf r|z=k,\mathbf c)$.
However, we do not directly optimize the marginal probability because our uniform prior of $P(z|\mathbf c)$ in Section~\ref{sec:approach} makes the problem degrade to the optimization of $P(\mathbf r|z=k,\mathbf c)$ individually for different $k$s.
Admittedly, we might train a model to learn the mixing probability $P(z=k|\mathbf c)$, predicting which decoder is responsible for a context $\mathbf c$.
This, however, violates our one-to-many assumption and indeed results in performance degradation as shown in Appendix~\ref{app:dropout}.
By contrast, our EM formulation does not predict the decoder $z$ by merely the context $\mathbf c$, but determines which decoder fits a given training sample comprising both the context $\mathbf c$ and response~$\mathbf r$.

\section{Conclusion}

In this work, we propose an Equal-size Hard Expectation--Maximization (EqHard-EM) algorithm for multi-decoder diverse dialogue generation.
Our EqHard-EM adopts the hard assignment as in Hard-EM, but additionally imposes an equal-assignment constraint to ensure all decoders are well-trained.
We provide detailed theoretical analysis to justify our algorithm.
Extensive experiments on two large-scale datasets further confirm that our EqHard-EM algorithm enables decoders to specialize and generate high-quality and diverse responses.

\textbf{Limitation.}
Our paper addresses the generic responses for dialogue systems by training a mixture of decoders. This inevitably slows down the training process similar to standard EM algorithms~\citep{em-translation}, because we need to perform inference for all decoders even though our EqHard-EM propagates gradient to one.
Despite this, we find that our Hungarian algorithm has little overhead for the training process, and our approach is more efficient at inference compared with beam search, because multiple decoders can be parallelized across different GPUs. 

\FloatBarrier

\bibliography{iclr2023_conference}
\bibliographystyle{iclr2023_conference}

\appendix

\section{Proof of Theorem~\ref{thm:error}}\label{sec:prf:error}

We introduce two lemmas for proving Theorem~\ref{thm:error}. The first lemma controls the error between the empirical average and the true expectation. It is a direct application of Hoeffding's inequality. The second lemma bounds the difference between the learned probability and the true underlying probability. Following the notation in Eqn.~\eqref{eq:E-step}, we use $Q(z|\mathbf c, \mathbf r)$ as the estimated posterior.

\begin{lemma}\label{lem:hoeffding}
Given a dataset $\calD$ consisting of context-response pairs $(\vc, \vr)$, for any $z$, and $\epsilon > 0$, we have 
\begin{align*}
    \Pr\bigg( \bigg| \frac{1}{|\calD|} \sum_{(\vc, \vr)\in\calD}Q(z|\mathbf c,\mathbf r) - \E_{\vc, \vr} [ Q(z|\mathbf c,\mathbf r) ] \bigg| \ge \epsilon \bigg) \le 2 \exp \left(-2 \epsilon^2 |\calD| \right)
\end{align*} 
\end{lemma}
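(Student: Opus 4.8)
The plan is to recognize the quantity inside the probability as the deviation of an empirical average of i.i.d.\ bounded random variables from its mean, so that Hoeffding's inequality applies directly with essentially no further work.

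First I would fix the latent value $z$ (the statement is for an arbitrary but fixed $z$) and, for each context--response pair, introduce the random variable obtained by evaluating the estimated posterior at that pair, namely $X = Q(z \mid \vc, \vr)$. Two facts then need to be checked. \emph{Boundedness}: since $Q(z \mid \vc, \vr)$ is an estimated posterior probability, we have $0 \le Q(z \mid \vc, \vr) \le 1$, so every such $X$ lies in the interval $[0,1]$. \emph{Independence}: viewing $\calD$ as $|\calD|$ i.i.d.\ samples of $(\vc, \vr)$, the variables $X_1, \dots, X_{|\calD|}$ are i.i.d.\ as well, being measurable functions of the i.i.d.\ samples. Their empirical mean is exactly $\frac{1}{|\calD|}\sum_{(\vc, \vr)\in\calD} Q(z \mid \vc, \vr)$, and its expectation equals $\E_{\vc, \vr}[Q(z \mid \vc, \vr)]$.

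Second, I would invoke the two-sided Hoeffding bound: for i.i.d.\ variables taking values in $[a,b]$, the empirical mean $\bar X_n$ over $n$ samples satisfies $\Pr(|\bar X_n - \E \bar X_n| \ge \epsilon) \le 2\exp\!\big(-2 n \epsilon^2 / (b-a)^2\big)$. Substituting $a = 0$, $b = 1$ (so that $(b-a)^2 = 1$) and $n = |\calD|$ yields $2 \exp(-2 \epsilon^2 |\calD|)$, which is precisely the claimed bound.

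The only point requiring any care is verifying the hypotheses of Hoeffding's inequality, and both are immediate: boundedness because $Q$ is a probability, and independence because $\calD$ is an i.i.d.\ sample. Beyond that the argument is a one-line substitution, so there is no genuine obstacle; the lemma is stated mainly to isolate the finite-sample fluctuation term that later combines (via a union bound over $z$ and the posterior-estimation bound) in the proof of Theorem~\ref{thm:error}.
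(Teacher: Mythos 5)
Your proposal is correct and takes essentially the same route as the paper, which simply observes that $Q(z|\vc,\vr)$ is a (bounded) real-valued random variable and invokes Hoeffding's inequality. In fact, your write-up is slightly more careful than the paper's one-line proof, since you make explicit the two hypotheses—boundedness in $[0,1]$ (needed so that $(b-a)^2=1$ gives exactly the constant $2\epsilon^2|\calD|$ in the exponent) and the i.i.d.\ structure of $\calD$—that the paper leaves implicit.
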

\begin{proof}
Observe that $Q(z|\mathbf c,\mathbf r)$ is a real-valued random variable. The proof is then clear by using Hoeffding's inequality.
\end{proof}

Lemma~\ref{lem:hoeffding} is a quantitative description of the law of large numbers.

\bigskip
\begin{lemma}\label{lem:abs}
Given any $z$ and estimation $Q(z|\mathbf c,\mathbf r)$ of the true $P(z|\vc, \vr)$, we have 
\begin{align*}
    \left| \E_{\vc, \vr} [ Q(z|\mathbf c,\mathbf r) ] - \frac{1}{K} \right| \le \E_{\vc, \vr} \bigg[\big| Q(z|\mathbf c,\mathbf r) - P(z|\vc, \vr) \big|\bigg]
\end{align*}
\end{lemma}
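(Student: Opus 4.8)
The plan is to reduce the quantity $1/K$ to an expectation of the \emph{true} posterior, after which the bound follows from a single application of Jensen's inequality. The key identity is that, under the uniform prior assumption $P(z=k\mid\vc)=1/K$ made in Section~\ref{sec:approach}, the aggregated posterior equals the prior:
\begin{align*}
\E_{\vc,\vr}\big[P(z\mid\vc,\vr)\big]
= \sum_{\vc}\sum_{\vr} P(z\mid\vc,\vr)\,P(\vr,\vc)
= \sum_{\vc} P(z\mid\vc)\,P(\vc)
= \sum_{\vc} \tfrac{1}{K}\,P(\vc)
= \tfrac{1}{K},
\end{align*}
for every $z\in\{1,\dots,K\}$. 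This is precisely the marginalization argument already displayed in the theoretical-analysis paragraph preceding Theorem~\ref{thm:error}, so I would invoke it directly rather than re-deriving it.

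With this identity in hand, the rest is routine. First I would replace the constant $1/K$ inside the absolute value by $\E_{\vc,\vr}[P(z\mid\vc,\vr)]$, then use linearity of expectation to merge the two terms:
\begin{align*}
\left|\E_{\vc,\vr}[Q(z\mid\vc,\vr)] - \tfrac{1}{K}\right|
= \left|\E_{\vc,\vr}\big[Q(z\mid\vc,\vr) - P(z\mid\vc,\vr)\big]\right|.
\end{align*}
Finally, applying Jensen's inequality to the convex function $|\cdot|$ (equivalently, the standard fact $|\E[X]|\le\E[|X|]$) moves the absolute value inside the expectation, yielding
\begin{align*}
\left|\E_{\vc,\vr}\big[Q(z\mid\vc,\vr) - P(z\mid\vc,\vr)\big]\right|
\le \E_{\vc,\vr}\big[\,\big|Q(z\mid\vc,\vr) - P(z\mid\vc,\vr)\big|\,\big],
\end{align*}
which is exactly the claimed inequality.

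The argument is almost entirely mechanical, so there is no substantive obstacle; the only point requiring care is the first identity, which is where the uniform-prior assumption is used in an essential way. I would make sure the expectation $\E_{\vc,\vr}$ is consistently taken with respect to the true joint data distribution $P(\vr,\vc)$ (the same measure appearing in the sum above), since the whole reduction hinges on the true posterior integrating to $1/K$ against that measure; the estimated posterior $Q$ enjoys no such exact property, which is why the residual term $\E_{\vc,\vr}[|Q-P|]$ cannot be eliminated. Once combined with the finite-sample bound of Lemma~\ref{lem:hoeffding} via the triangle inequality, this completes the ingredients for Theorem~\ref{thm:error}.
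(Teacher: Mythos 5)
Your proof is correct and follows essentially the same route as the paper's: both replace $1/K$ by the marginalization $\E_{\vc,\vr}[P(z|\vc,\vr)] = \sum_{\vc}\sum_{\vr} P(z|\vc,\vr)P(\vc,\vr)$ (valid under the uniform-prior assumption) and then move the absolute value inside the expectation, your ``Jensen's inequality'' step being exactly the paper's triangle-inequality step applied to the sum. Your added remark that the measure must be the true joint $P(\vc,\vr)$ is a fair point of care, but there is no substantive difference between the two arguments.
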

\begin{proof}
\begin{align}
    \left| \E_{\vc, \vr} [ Q(z|\mathbf c,\mathbf r) ] - \frac{1}{K} \right| =& \left| \sum_\vc \sum_\vr Q(z|\mathbf c,\mathbf r) P(\vc,\vr) - \frac{1}{K} \right|\\ 
    =& \left| \sum_\vc \sum_\vr Q(z|\mathbf c,\mathbf r) P(\vc,\vr) - \sum_\vc \sum_\vr P(z|\vc, \vr) P(\vc,\vr) \right| \label{eq:lem:abs:1} \\
    =& \left| \sum_\vc \sum_\vr  P(\vc,\vr) \left(Q(z|\mathbf c,\mathbf r)  -  P(z|\vc, \vr) \right) \right| \\
    \le&  \sum_\vc \sum_\vr  P(\vc,\vr) \left|Q(z|\mathbf c,\mathbf r)  -  P(z|\vc, \vr)  \right| \label{eq:lem:abs:2} \\
    =& \E_{\vc, \vr} \bigg[\big| Q(z|\mathbf c,\mathbf r) - P(z|\vc, \vr) \big|\bigg]
\end{align}

Here, Eqn.~\eqref{eq:lem:abs:1} replaces $1/K$ with the marginalization of $P(z, \vc, \vr)$ over $\vc$ and $\vr$, and \eqref{eq:lem:abs:2} follows the triangle inequality.
\end{proof}

Lemma~\ref{lem:abs} bounds the absolute difference between the estimation $Q(z|\mathbf c,\mathbf r)$ and the true $P(z|\vc, \vr)$.

\bigskip

\fta*
\begin{proof}
Lemma~\ref{lem:hoeffding} shows an inequality that upper bounds the deviation from a uniform assignment for a particular $z$. The probability for the original inequality to hold for all $z$ is then:
\begin{align*}
    \Pr\bigg(\forall z \in [K]:  \bigg| \frac{1}{|\calD|} \sum_{(\vc, \vr)\in\calD} Q(z|\mathbf c,\mathbf r) - \E_{\vc, \vr} [ Q(z|\mathbf c,\mathbf r) ]\bigg| \ge \epsilon \bigg) \le 2 K \exp \left(-2 \epsilon^2 |\calD| \right)
\end{align*}

Choose an $\epsilon$ such that $\delta = 2 K \exp \left(-2 \epsilon^2 |\calD| \right)$. We have \begin{align*}
    \Pr\left(\forall z \in [K]:  \bigg| \frac{1}{|\calD|} \sum_{(\vc, \vr)\in\calD} Q(z|\mathbf c,\mathbf r) -  \E_{\vc, \vr} [ Q(z|\mathbf c,\mathbf r) ] \bigg| \ge \sqrt{\frac{1}{2 |\calD|} \log \left(\frac{2K}{\delta}\right) } \right) \le \delta \label{eq:highprob}
\end{align*}

Then with probability at least $1-\delta$, we have:
\begin{align*}
    & \left| \frac{1}{|\calD|} \sum_{(\vc, \vr)\in\calD} Q(z|\mathbf c,\mathbf r) - \frac{1}{K} \right| \\
    \le& \left| \frac{1}{|\calD|} \sum_{(\mathbf{r}, \mathbf{c} )\in\calD} Q(z|\mathbf c,\mathbf r) -\E_{\vc, \vr} [ Q(z|\mathbf c,\mathbf r) ] \right| + \left| \E_{\vc, \vr} [ Q(z|\mathbf c,\mathbf r) ] - \frac{1}{K} \right| \\
    \le& \sqrt{\frac{1}{2 |\calD|} \log \left(\frac{2K}{\delta}\right) } + \left| \E_{\vc, \vr} [ Q(z|\mathbf c,\mathbf r) ] - \frac{1}{K} \right| \tag{w.p. $1-\delta$, by Lemma~\ref{lem:hoeffding}}  \\
    \le& \sqrt{\frac{1}{2 |\calD|} \log \left(\frac{2K}{\delta}\right) } + \E_{\vc, \vr} \bigg[\big| Q(z|\mathbf c,\mathbf r) - P(z|\vc, \vr) \big|\bigg] \tag{by Lemma~\ref{lem:abs}}
\end{align*}
concluding the proof.
\end{proof}

\section{Implementation Details} \label{app:details}

Our multi-adapter model used T5-small~\citep{t52020} from HuggingFace~\citep{wolf2019huggingface} as the backbone: 6 layers, 8 attention heads, an input/output dimension of 512, and an inner dimension of 2048 for both the encoder and decoder.
We implemented our multi-decoder model by inserting an adapter layer with an inner dimension of 256 after each Transformer decoder layer. We adopted 10 decoders based on the common evaluation metric of diverse text generation~\citep{bleu-prf1-2017,gu2018dialogwae}.
Table~\ref{tab:adapter-vs-transformer} shows a comparison between our multi-adapter architecture and the full-Transformer architecture.
As seen, our multi-adapter model retains a competitive BLEU performance while only using 24\% of the parameters. We adopted the multi-adapter architecture in the main experiments, because it helped our development process much. 

For training, we had two stages: 1) Pretraining of the shared Transformer model, and 2) EqHard-EM training of the adapter layers.
In both stages, we used a learning rate of 0.001 with the Adam optimizer with $\bm\beta=(0.9, 0.999)$.
We used a batch size of 64 for pretraining. For our 10-decoder EqHard-EM training, we considered 640 samples in an E-step, assigning 64 samples for each decoder. 

\begin{table}[!b]
\centering
\resizebox{0.45\textwidth}{!}{%
\begin{tabular}{|l|cc|}
\hline
Decoder Architecture &  \#Parameters & BLEU2-F \\ \hline
Adapter              & 108M             & 10.13   \\
Full Transformer          & 451M             & 11.08   \\ \hline
\end{tabular}%
}
\caption{Comparison between adapter and full-Transformer architectures for our 10-decoder model.}
\label{tab:adapter-vs-transformer}
\end{table}

\section{Effect of the Number of Decoders} \label{apdx:decoder-scaling}
We investigate how well our approach scales with the number of decoders.
Figure~\ref{fig:decoder-scaling} shows a comparison between beam search and our multi-decoder model under 5-, 10-, and 15-generation settings. Due to the limited resources, we conducted the analysis only on the Weibo dataset.
As seen, our multi-decoder model performs similarly to beam search when there are only 5 decoders. However, the performance gap increases when we have more decoders, which verifies the need for our multi-decoder model.

\begin{figure}
    \centering
    \includegraphics[width=0.40\textwidth]{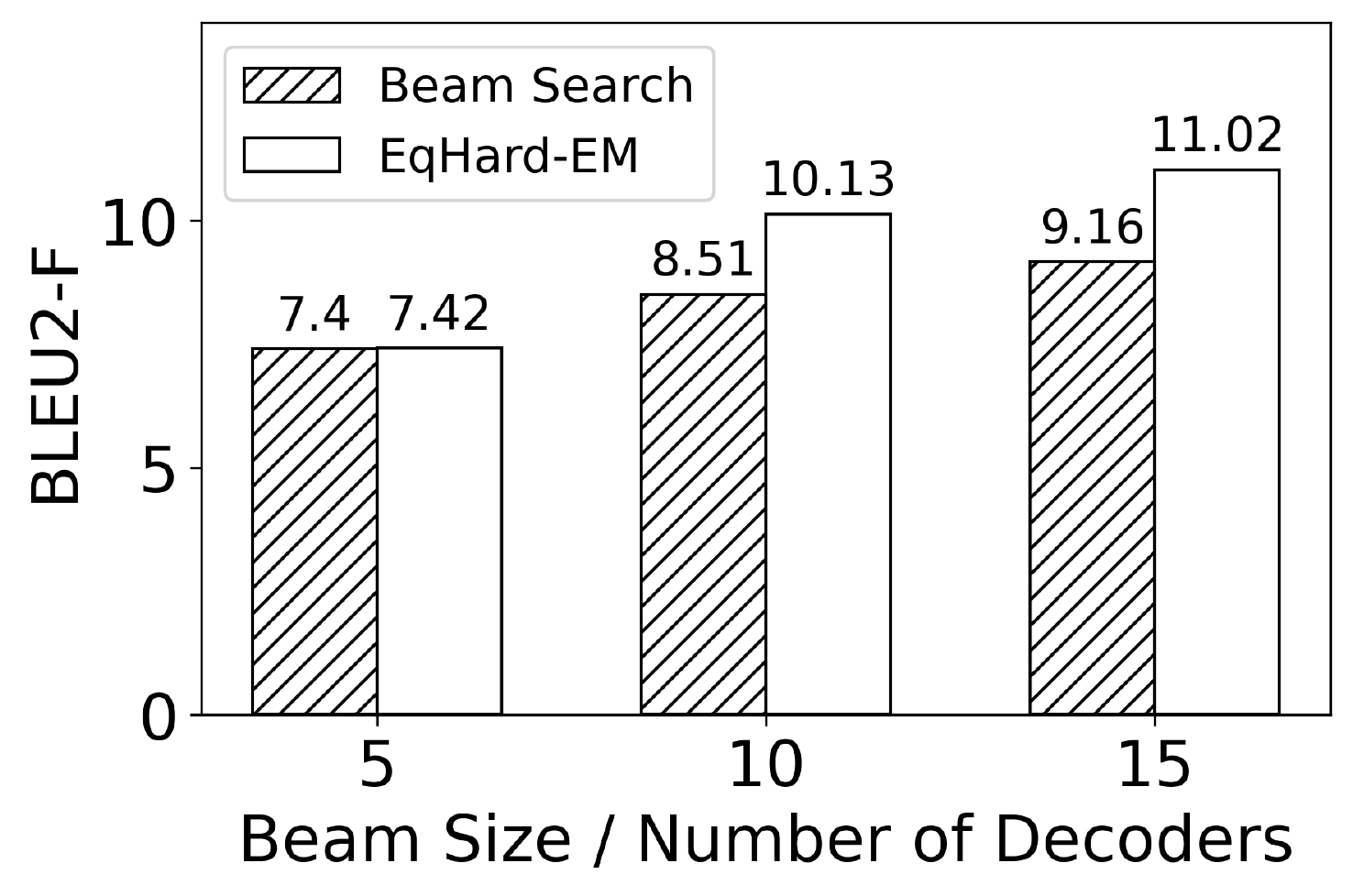}\vspace{-.3cm}
    \caption{Comparison between beam search and our multi-decoder model in 5-, 10-, and 15-generation settings on the Weibo dataset.}%
\label{fig:decoder-scaling}
\vspace*{-0em}
\end{figure}

\section{Human Evaluation} \label{sec:he}

\begin{table}[!t]
\centering
\resizebox{0.8\textwidth}{!}{%
\begin{tabular}{|l|l|rrrl|}
\hline
 & Model & \multicolumn{1}{l}{Wins} & \multicolumn{1}{l}{Ties} & \multicolumn{1}{l}{Losses} & $p$-value \\ \hline
\multirow{2}{*}{Overall Quality} & T5-small beam search & 32.8\% & 19.0\% & 48.2\% & \multirow{2}{*}{$<$0.0001} \\
 & Multi-adapter + EqHard-EM & \textbf{48.2\%} & 19.0\% & \textbf{32.8\%} &  \\ \hline
\multirow{2}{*}{Diversity} & T5-small beam search & 4.8\% & 8.0\% & 87.2\% & \multirow{2}{*}{$<$0.0001} \\
 & Multi-adapter + EqHard-EM & \textbf{87.2\%} & 8.0\% & \textbf{4.8\%} &  \\ \hline
\end{tabular}%
}
\caption{Human evaluation for the overall quality and diversity of the generated responses.
Five annotators were invited to evaluate 100 samples on the Weibo dataset.
The $p$-value is given by the one-sided binomial test (ignoring ties).}
\label{tab:human-evaluation}

\end{table}
\begin{CJK*}{UTF8}{gbsn}

\begin{table}[t]\vspace{-.2cm}
\resizebox{\textwidth}{!}{%
\begin{tabular}{|lll|}
\hline
\multicolumn{3}{|c|}{\begin{tabular}[c]{@{}c@{}}Weibo Input: 终于买了个爱疯 ，发觉也没啥用，还不如ipad\\ \textit{ Finally got an iPhone, found it useless, even worse than an ipad }\end{tabular}} \\ \hline
\multicolumn{1}{|c|}{Soft-EM} & \multicolumn{1}{c|}{Hard-EM} & \multicolumn{1}{c|}{EqHard-EM} \\ \hline
\multicolumn{1}{|l|}{\begin{tabular}[c]{@{}l@{}}我也有一个，不过没有那么大\\ \textit{I have one too, but not so big}\end{tabular}} & \multicolumn{1}{|l|}{\begin{tabular}[c]{@{}l@{}}江江江 {[}repeat{]}\\ \textit{river river river {[}repeat{]} }\end{tabular}} & \begin{tabular}[|c]{@{}l@{}}这个是什么牌子的？\\ \textit{ What brand is this? }\end{tabular} \\[13pt]
\multicolumn{1}{|l|}{\begin{tabular}[c]{@{}l@{}}我也有一个，不过没有那么大\\ \textit{I have one too, but not so big}\end{tabular}} & \multicolumn{1}{|l|}{\begin{tabular}[c]{@{}l@{}}{[}eos{]}\\ \textit{ {[}eos{]} }\end{tabular}} & \begin{tabular}[c]{@{}l@{}}我也想买一个 ，可惜没钱买\\ \textit{ I want to buy one too, too bad I don't have money }\end{tabular} \\ [13pt]
\multicolumn{1}{|l|}{\begin{tabular}[c]{@{}l@{}}我也有一个，不过没有那么大\\ \textit{I have one too, but not so big}\end{tabular}} & \multicolumn{1}{|l|}{\begin{tabular}[c]{@{}l@{}}珍珍珍 [repeat]\\ \textit{ treasure treasure treasure [repeat] }\end{tabular}} & \begin{tabular}[|c]{@{}l@{}}我也是 ，还不如ipad呢\\ \textit{ Me too, even worse than an iPad indeed }\end{tabular} \\ [13pt]
\multicolumn{1}{|l|}{\begin{tabular}[c]{@{}l@{}}我也有一个，不过没有那么大\\ \textit{I have one too, but not so big}\end{tabular}} & \multicolumn{1}{|l|}{\begin{tabular}[c]{@{}l@{}}{[}eos{]}\\ \textit{ {[}eos{]} }\end{tabular}} & \begin{tabular}[c]{@{}l@{}}我也想买一个 ，可惜没钱买\\ \textit{ I want to buy one too, too bad I don't have money }\end{tabular} \\ [13pt]
\multicolumn{1}{|l|}{\begin{tabular}[c]{@{}l@{}}我也有一个，不过没有那么大\\ \textit{I have one too, but not so big}\end{tabular}} & \multicolumn{1}{|l|}{\begin{tabular}[c]{@{}l@{}}{[}eos{]}\\ \textit{ {[}eos{]} }\end{tabular}} & \begin{tabular}[c]{@{}l@{}}这个东意不错 ，我也想买一个\\ \textit{ This stuff is good, I want to buy one too }\end{tabular} \\ [13pt]
\multicolumn{1}{|l|}{\begin{tabular}[c]{@{}l@{}}我也有一个，不过没有那么大\\ \textit{I have one too, but not so big}\end{tabular}} & \multicolumn{1}{|l|}{\begin{tabular}[c]{@{}l@{}}我也有一个，不过没有那么大\\ \textit{ I have one too, but not so big }\end{tabular}} & \begin{tabular}[|c]{@{}l@{}}我也有一个，不过没有ipad\\ \textit{ I have one too, but not the ipad }\end{tabular} \\ [13pt]
\multicolumn{1}{|l|}{\begin{tabular}[c]{@{}l@{}}我也有一个，不过没有那么大\\ \textit{I have one too, but not so big}\end{tabular}} & \multicolumn{1}{|l|}{\begin{tabular}[c]{@{}l@{}}想想想 {[}repeat{]}\\ \textit{ think think think {[}repeat{]} }\end{tabular}} & \begin{tabular}[|c]{@{}l@{}}你这是在哪里啊？ 我也想买一个\\ \textit{ Where are you at? I want to buy one too }\end{tabular} \\ [13pt]
\multicolumn{1}{|l|}{\begin{tabular}[c]{@{}l@{}}这个是什么？求解\\ \textit{What is this? Please exp{[}lain{]}}\end{tabular}} & \multicolumn{1}{l|}{\begin{tabular}[|c]{@{}l@{}}苇苇苇 {[}repeat{]}\\ \textit{ reed reed reed {[}repeat{]} }\end{tabular}} & \begin{tabular}[c]{@{}l@{}}我也想买一个，但是没钱买\\ \textit{ I want to buy one too, but I don't have money }\end{tabular} \\ [13pt]
\multicolumn{1}{|l|}{\begin{tabular}[c]{@{}l@{}}我也有一个，不过没有那么大\\ \textit{I have one too, but not so big}\end{tabular}} & \multicolumn{1}{|l|}{\begin{tabular}[c]{@{}l@{}}雪雪雪 {[}repeat{]}\\ \textit{ snow snow snow {[}repeat{]} }\end{tabular}} & \begin{tabular}[c]{@{}l@{}}我也买了一个 ，不过还是很不错的\\ \textit{ I bought one too, but it's actually pretty good }\end{tabular} \\ [13pt]
\multicolumn{1}{|l|}{\begin{tabular}[c]{@{}l@{}}这个是什么？求解释\\ \textit{What is this? Please explain}\end{tabular}} & \multicolumn{1}{l|}{\begin{tabular}[c]{@{}l@{}}。。。 {[}repeat{]}\\ \textit{ ... {[}repeat{]} }\end{tabular}} & \begin{tabular}[c]{@{}l@{}}这个不错，不过我喜欢 。\\ \textit{ This is not bad, but I like it. }\end{tabular} \\ \hline
\hline
\multicolumn{3}{|c|}{OpenSubtitles Input: and you know what happens if we object?} \\ \hline
\multicolumn{1}{|c|}{Soft-EM} & \multicolumn{1}{c|}{Hard-EM} & \multicolumn{1}{c|}{EqHard-EM} \\ \hline
\multicolumn{1}{|l|}{i'm not gonna object.} & \multicolumn{1}{l|}{what what what [repeat]} & i don't know. \\ [5pt]
\multicolumn{1}{|l|}{i do.} & \multicolumn{1}{l|}{i'm not a victim.} & what do you mean? \\ [5pt]
\multicolumn{1}{|l|}{i don't know.} & \multicolumn{1}{l|}{and and and [repeat]} & if we object, we 'll be able to. \\ [5pt]
\multicolumn{1}{|l|}{i don't know.} & \multicolumn{1}{l|}{if really really really [repeat]} & i don't know, i don't know. \\ [5pt]
\multicolumn{1}{|l|}{i do.} & \multicolumn{1}{l|}{no pe pe pe [repeat]} & i don't know. \\ [5pt]
\multicolumn{1}{|l|}{i don't know.} & \multicolumn{1}{l|}{we're gonna have to do this.} & i'm not gonna be able to do anything. \\ [5pt]
\multicolumn{1}{|l|}{i don't know.} & \multicolumn{1}{l|}{okay okay okay [repeat]} & we're gonna have to. \\ [5pt]
\multicolumn{1}{|l|}{i don't know.} & \multicolumn{1}{l|}{hurry hurry hurry [repeat]} & what? \\ [5pt]
\multicolumn{1}{|l|}{i do.} & \multicolumn{1}{l|}{what?} & i don't know. \\ [5pt]
\multicolumn{1}{|l|}{i don't know.} & \multicolumn{1}{l|}{then then then [repeat]} & i'm not gonna be able to do this, mr. santos. \\ \hline
\end{tabular}%
}
\caption{Case studies. Translations are in \textit{italic}. [repeat] indicates that further repetitions are omitted, and [eos] indicates an end-of-sentence token is generated as the first token.}
\label{tab:case-study}
\end{table}

\end{CJK*}

We conducted human evaluation to compare both the quality and diversity of the responses from T5-small (beam search) and our multi-decoder model. We again chose the Weibo dataset, because it is less noisy.
Five native Chinese speakers were provided with a dialogue context and generated responses of two systems, and were asked to select the better system or indicate a tie.
For quality evaluation, annotators were presented with only one response from each model, preventing them from identifying the systems based on diversity.
For diversity, all 10 responses were presented.
In both cases, the presentation order of the two systems was randomly shuffled, so the annotators could not tell which system generated which response.

Table~\ref{tab:human-evaluation} shows the results of human evaluation. Our multi-decoder model shows statistically significant improvement in terms of overall quality and diversity. Especially, annotators believe our approach generates more diverse utterances on 18x more samples. The results are highly consistent with the automatic metrics in Table~\ref{tab:human-evaluation}, confirming the effectiveness of our approach.

\section{Case Study} \label{sec:case-study}
Table~\ref{tab:case-study} shows example responses from Weibo and OpenSubtitles.
As seen, Soft-EM exhibits the synchronous-training collapse, where all decoders generate highly similar responses.
Hard-EM, on the other hand, suffers from the non-training collapse, where a large number of decoders are not properly trained and generate low-quality text (e.g., certain words being repeated).
By contrast, our EqHard-EM algorithm enables us to properly train all decoders to generate high-quality, diverse responses.

\section{Analysis of Additional EM Variants} \label{app:dropout}

\citet{em-translation} also realize the collapse issues of classic EM algorithms; however, they explain the poor performance of Soft-EM by drawing an analogy to the bypassing of latent variables in variational auto-encoders~\citep[VAE,][]{bowman-etal-2016-generating}. Based on such interpretation, they  propose a no-dropout trick that disables the dropout layers for the E-step while retaining them for the M-step. 

We analyze the no-dropout trick in Table~\ref{tab:ablation}. As seen, disabling dropout has minimum effect on the EM algorithms, and does not prevent collapses for Hard-EM and Soft-EM. This is not surprising because even \citet{em-translation} acknowledge that their no-dropout trick does not work consistently, as shown in Table 7 of Appendix C in their paper (red and blue numbers). The one-to-many phenomenon in our dialogue setting is more severe than their machine translation setting, and it is understandable that such an ad hoc trick, without much theoretical justification, may fail in the more difficult dialogue task.

We also analyze EM variants with learned priors, where a neural network is used to predict the decoder given the dialogue context. This is another variant in \citet{em-translation}.
As shown, learned priors perform worse than uniform priors due to the more severe collapse problem, which is demonstrated by the extremely low BLEU scores.
This is because a learned prior gives the model more opportunities to select the best decoder, further worsening the ``rich-gets-richer'' effect.

Therefore, we do not consider the recurrent dropout trick and learned priors in our main experiments due to their ineffectiveness and poor performance.

\begin{table}[]
\centering
\resizebox{\columnwidth}{!}{%
\begin{tabular}{|l|cl|rrr|rrr|rr|r|}
\hline
\multirow{2}{*}{Dataset} &
  \multicolumn{2}{c|}{\multirow{2}{*}{Model}} &
  \multicolumn{3}{c|}{BLEU1} &
  \multicolumn{3}{c|}{BLEU2} &
  \multicolumn{2}{c|}{Distinct n-gram} &
  \multicolumn{1}{c|}{Pairwise} \\ \cline{4-11}
 &
  \multicolumn{2}{c|}{} &
  \multicolumn{1}{c}{F} &
  \multicolumn{1}{c}{P} &
  \multicolumn{1}{c|}{R} &
  \multicolumn{1}{c}{F} &
  \multicolumn{1}{c}{P} &
  \multicolumn{1}{c|}{R} &
  \multicolumn{1}{c}{Dist-1} &
  \multicolumn{1}{c|}{Dist-2} &
  \multicolumn{1}{c|}{BLEU} \\ \hline\hline
\multirow{9}{*}{Weibo} &
  \multicolumn{1}{c|}{\multirow{4}{*}{Soft-EM}} &
  uniform prior &
  17.02 &
  \textbf{27.55} &
  12.32 &
  6.27 &
  \textbf{16.03} &
  3.89 &
  10.69 &
  12.45 &
  86.25 \\
 &
  \multicolumn{1}{c|}{} &
  uniform prior, no dropout &
  17.13 &
  27.53 &
  12.43 &
  6.30 &
  16.02 &
  3.92 &
  10.88 &
  12.77 &
  84.77 \\
 &
  \multicolumn{1}{c|}{} &
  learned prior &
  10.33 &
  8.63 &
  12.86 &
  3.18 &
  2.76 &
  3.74 &
  27.89 &
  54.53 &
  1.09 \\
 &
  \multicolumn{1}{c|}{} &
  learned prior, no dropout &
  13.58 &
  12.95 &
  14.27 &
  4.50 &
  4.66 &
  4.35 &
  20.56 &
  38.31 &
  4.02 \\ \cline{2-12} 
 &
  \multicolumn{1}{c|}{\multirow{4}{*}{Hard-EM}} &
  uniform prior &
  4.81 &
  3.09 &
  10.84 &
  2.15 &
  1.61 &
  3.22 &
  4.01 &
  4.11 &
  0.00 \\
 &
  \multicolumn{1}{c|}{} &
  uniform prior, no dropout &
  5.17 &
  3.39 &
  10.86 &
  2.25 &
  1.71 &
  3.27 &
  3.51 &
  3.60 &
  2.22 \\
 &
  \multicolumn{1}{c|}{} &
  learned prior &
  5.07 &
  3.30 &
  10.92 &
  2.14 &
  1.61 &
  3.20 &
  5.22 &
  5.38 &
  0.00 \\
 &
  \multicolumn{1}{c|}{} &
  learned prior, no dropout &
  4.83 &
  3.11 &
  10.81 &
  2.32 &
  1.78 &
  3.31 &
  9.65 &
  10.02 &
  0.02 \\ \cline{2-12} 
 &
  \multicolumn{2}{c|}{EqHard-EM} &
  \textbf{22.65} &
  26.17 &
  \textbf{19.96} &
  \textbf{10.13} &
  14.55 &
  \textbf{7.77} &
  27.03 &
  42.02 &
  22.32 \\ \hline\hline
\multirow{9}{*}{Open-} &
  \multicolumn{1}{c|}{\multirow{4}{*}{Soft-EM}} &
  uniform prior &
  15.48 &
  \textbf{14.55} &
  16.54 &
  3.03 &
  \textbf{2.76} &
  3.35 &
  12.89 &
  15.35 &
  72.95 \\
 \multirow{9}{*}{Subtitles} &
  \multicolumn{1}{c|}{} &
  uniform prior, no dropout &
  15.52 &
  14.51 &
  16.67 &
  3.00 &
  2.71 &
  3.37 &
  13.23 &
  15.93 &
  70.29 \\
 &
  \multicolumn{1}{c|}{} &
  learned prior &
  15.45 &
  14.52 &
  16.50 &
  3.01 &
  \textbf{2.76} &
  3.31 &
  13.01 &
  15.45 &
  71.86 \\
 &
  \multicolumn{1}{c|}{} &
  learned prior, no dropout &
  15.47 &
  14.48 &
  16.60 &
  2.95 &
  2.71 &
  3.25 &
  13.40 &
  16.17 &
  67.46 \\ \cline{2-12} 
 &
  \multicolumn{1}{c|}{\multirow{4}{*}{Hard-EM}} &
  uniform prior &
  6.84 &
  4.13 &
  19.81 &
  1.25 &
  0.73 &
  4.30 &
  4.83 &
  4.81 &
  1.50 \\
 &
  \multicolumn{1}{c|}{} &
  uniform prior, no dropout &
  9.20 &
  5.66 &
  24.54 &
  1.56 &
  0.88 &
  6.82 &
  9.87 &
  10.60 &
  0.61 \\
 &
  \multicolumn{1}{c|}{} &
  learned prior &
  4.71 &
  2.73 &
  17.11 &
  0.81 &
  0.46 &
  3.31 &
  6.82 &
  5.57 &
  0.23 \\
 &
  \multicolumn{1}{c|}{} &
  learned prior, no dropout &
  4.70 &
  2.67 &
  19.85 &
  0.70 &
  0.38 &
  3.80 &
  8.60 &
  6.57 &
  0.00 \\ \cline{2-12} 
 &
  \multicolumn{2}{c|}{EqHard-EM} &
  \textbf{17.86} &
  12.98 &
  \textbf{28.60} &
  \textbf{3.76} &
  2.36 &
  \textbf{9.27} &
  34.09 &
  50.41 &
  13.15 \\ \hline
\end{tabular}%
}
\caption{Comparison with additional EM variants on Weibo and OpenSubtitles.}
\label{tab:ablation}
\end{table}

\section{Efficiency Analysis} \label{apdx:efficiency}
We evaluated training and inference efficiency on an i9-9900X CPU and a TITAN RTX GPU.
Compared with classic EM variants, our EqHard-EM involves an additional assignment step using the Hungarian algorithm.
As shown in Table~\ref{tab:efficiency}: the E-step, Hungarian algorithm, and M-step take 75.69\%, 0.97\%, and 24.31\% of the total training time, respectively.
This suggests that our EqHard-EM incurs a negligible overhead compared with soft and hard EM variants. 
At inference, our multi-adapter model processes 79.1 samples per second, whereas beam search processes 16.9 samples per second.
Our approach achieves a 4.7x speedup because inference for each decoder can be computed in parallel across multiple GPUs, whereas beam search cannot leverage the same level of parallelism.

\begin{table}[!t]
\centering\vspace{.2cm}
\resizebox{0.43\textwidth}{!}{%
\begin{tabular}{|ccc|}
\hline
\multicolumn{3}{|c|}{Training Time} \\ \hline
E-Step   & M-Step   & Hungarian   \\ \hline
75.69\%    & 24.31\%    & 0.97\%        \\ \hline \hline
\multicolumn{3}{|c|}{Inference Time} \\ \hline
Beam search & \multicolumn{2}{c|}{16.9 samples per second} \\ \hline
Ours & \multicolumn{2}{c|}{79.1 samples per second} \\ \hline
\end{tabular}%
}
\caption{Training and inference efficiency analysis. The top half shows the percentage of training time spent in the E-step, M-step, and the Hungarian algorithm (part of the E-step). The bottom half compares the inference efficiency of our EqHard-EM algorithm against beam search.}\vspace{.5cm}
\label{tab:efficiency}
\end{table}

\section*{Acknowledgments}
We would like to thank all reviewers and chairs for their comments. We would also like to thank Nidhi Hegde and Herbert Yang for their valuable suggestions. This research was supported in part by Natural
Sciences and Engineering Research Council of Canada (NSERC) under Grant No.~RGPIN2020-04465,
the Amii Fellow Program, the Canada CIFAR AI Chair Program, the Digital Research Alliance of Canada (alliancecan.ca), and a Mitacs Accelerate (Cluster) grant.
\end{document}